\newtheorem{thm}{Theorem}
\newtheorem{prop}{Proposition}
\newtheorem{cor}{Corollary}
\newtheorem{defn}{Definition}
\theoremstyle{plain}
\newcommand{\reals}{\mathbb{R}}
\newcommand{\nb}{{\bm{n}}}
\newcommand{\eps}{\epsilon}
\newcommand{\sF}{{\mathcal F}}
\newcommand{\sX}{{\mathcal X}}
\newcommand{\pfhat}{\widehat{\alpha}}
\newcommand{\pdhat}{\widehat{p}}
\newcommand{\Rhat}{\widehat{R}}
\newcommand{\RMhatul}{\widehat{\underline{R}}_M}
\newcommand{\RMul}{\underline{R}_M}
\newcommand{\fhat}{\widehat{f}}
\newcommand{\pihat}{\widehat{\pi}}
\newcommand{\nuhat}{\widehat{\nu}}
\newcommand{\conv}{\operatornamewithlimits{conv}}
\newcommand{\supp}{\operatornamewithlimits{supp}}
\newcommand{\Ptest}{P_0}
\newcommand{\Ptesthat}{\widehat{P}_0}
\newcommand{\Phat}{\widehat{P}}
\newcommand{\ntest}{n_0}
\newcommand{\ind}[1]{{\bf 1}_{\{#1\}}}
\date{}
\title{\bf{Class Proportion Estimation with Application to Multiclass Anomaly Rejection}}
\author{Tyler Sanderson\thanks{Current affiliation: Google Inc} \ and Clayton Scott
\\\\ Electrical Engineering and Computer Science Department
\\University of Michigan, Ann Arbor, USA} 
\begin{document}
\maketitle

\begin{abstract}
This work addresses two classification problems that fall under the
heading of domain adaptation, wherein the distributions of training and testing
examples differ. The first problem studied is that of class proportion
estimation, which is the problem of estimating the class proportions in an
unlabeled testing data set given labeled examples of each class. Compared
to previous work on this problem, our approach has the novel feature that it
does not require labeled training data from one of the classes. This property
allows us to address the second domain adaptation problem, namely,
multiclass anomaly rejection. Here, the goal is to design a classifier
that has the option of assigning a ``reject" label, indicating that the
instance did not arise from a class present in the training data. We
establish consistent learning strategies for both of these domain
adaptation problems, which to our knowledge are the first of their kind.
We also implement the class proportion estimation technique and
demonstrate its performance on several benchmark data sets.
\end{abstract}

\section{Introduction}

This work studies two related classification problems that fall under the
heading of domain adaptation, which is used to describe any learning
problem where the distributions of training and testing instances differ.
In particular, we study the problems of class proportion estimation (CPE)
and multiclass anomaly rejection (MCAR). Both problems are studied in a
multiclass setting, where the learner has access to a labeled training
data set as well as an unlabeled testing data set.  CPE is the problem of
estimating the class proportions governing the unlabeled testing data,
which may differ from those in the training data set. Unlike
previous approaches to CPE, our approach has the novel feature that it
does not require training data from one of the classes. This property
allows us to address MCAR, where the goal is to design a classifier that
may assign a ``reject" label, indicating that the instance did not arise
from a class present in the training data. We establish consistent
learning strategies for both of these domain adaptation problems, which to
our knowledge are the first of their kind. We also implement the CPE
technique and demonstrate its performance on several benchmark data sets.

To begin, let us state the CPE problem. There are $M$
classes, and a training sample for each class:
\begin{equation}
\label{eqn:traindata}
X_1^i, \ldots, X_{n_i}^i \stackrel{iid}{\sim} P_i,
\end{equation}
where $P_i$ is the $i$th class-conditional distribution, and
$X_j^i$ denotes the $j$th training sample from class $i$. In
addition, there is an unlabeled testing sample
\begin{equation}
\label{eqn:testdata}
X_1^0, \ldots, X_{n_0}^0 \sim P_0 := \sum_{i=1}^M \pi_i P_i,
\end{equation}
drawn from a mixture of the different classes. Here $\pi_i \ge 0$ and
$\sum_i \pi_i = 1$. The critical feature of this
problem is that the proportions $\pi_i$ are unknown and
different from the proportions represented in the training
data, so that $n_i/\sum_\ell n_\ell$ is not a reasonable
estimate. The goal is to estimate the $\pi_i$ accurately, while
making minimal assumptions on the $P_i$.

This form of domain adaptation arises frequently in
applications where training and testing data are gathered according to
different sampling plans. For example, training data gathered
prospectively may have user-determined sample sizes, while testing data
analyzed retrospectively have sample sizes that are beyond the user's
control.

One motivation for class proportion estimation is design of a
classifier for the test distribution. Suppose that there is a joint
distribution on labels and instances with $P_0$ the marginal distribution
on instances, $P_i$ the
class-conditional distributions, and $\pi_i$ the prior distribution
on
labels. The risk of a classifier $f : \sX \to
\{1, \ldots, M\}$, $\sX \subseteq \reals^d$ denoting the feature space,
may
be expressed
$
R(f) := \sum_i \pi_i R_i(f)
$
where $R_i(f) := P_i(\{x : f(x) \ne i\})$. The
class-conditional errors $R_i$ can
be estimated since the training data provide examples from each class.
However, the class proportions $\pi_i$ need to be estimated in order to
estimate the risk and thereby achieve good generalization.\footnote{Note 
that there are two possible settings for evaluation. In a
transductive setting, the goal is to assign labels to the given
test examples, while in a semi-supervised setting, the goal
is to use these unlabeled examples to design a
general-purpose classifier for classifying future draws from
$P_0$. We focus on the semi-supervised setting, which can be
specialized to the transductive setting.}

Our work is further motivated by MCAR, another domain adaptation
problem. In particular, we consider the problem of having no training data
from the last class ($n_M = 0$), which we consider to be the anomaly
class. Many real problems fall into this category. For example, a
classifier for object recognition will undoubtedly encounter object types
in the real world not observed during training. The first $M-1$ classes
may be viewed as the known training classes, and predicting the $M$th
class amounts to a decision to ``reject" an instance as not belonging to
any of the known classes. This problem is more challenging than regular
multiclass classification because estimation of $R_M(f)$ is no longer
straightforward.

To summarize, this work makes the following contributions: It establishes 
the first methodology for CPE that is consistent in the case where a class 
is not observed. The first known consistent discrimination rule for MCAR 
is also introduced. Finally, we propose a practical implementation of our 
CPE methodology, and support this approach with experimental comparisons 
to existing methods.


On the technical side, our approach hinges on a reduction of CPE to
another problem called {\em mixture proportion estimation}, reviewed
below. To convert methods for CPE to a discrimination rule for MCAR, we
also introduce a novel error estimation strategy for use with empirical
risk minimization, and a corresponding uniform error analysis using
multiclass VC theory.

\section{Related Work}

Class proportion estimation goes back at least to \citet{hall81},
who introduced an approach for univariate data based on matching a
weighted combination of class-conditional empirical distribution functions
to the empirical distribution function of the unlabeled data. This idea
was extended by \citet{titterington83}, who replaced empirical
distribution functions by kernel density estimates, which allowed this
``distribution matching" method to extend easily to multivariate data. The
matching criterion is the $L^2$ distance between estimates of the marginal
density $P_0$, and can be easily formulated as an unconstrained or 
constrained
(if the class proportions are required to belong to a simplex) quadratic
program. These authors established asymptotic normality of the estimated
proportions under conditions that are typical of $L^2$ consistency for
kernel density estimates. See \citet{hall03} for additional references on
this strand of work.

Two other works in the machine learning literature have also addressed 
CPE. \citet{saerens01} introduced an EM algorithm in a logistic regression 
framework that adjusts class proportions to maximize the test data 
likelihood given the trained model. \citet{plessis12} developed an 
algorithm based on distribution matching but with a Kullback-Leibler 
criterion. None of the above cited works consider the case where one of 
the classes is unobserved, nor do they establish a consistent 
discrimination rule. Only Hall and Titterington provide theoretical 
analysis for CPE; Hall's analysis considers univariate data, while 
Titterington's assumes the existence of densities.

Multiclass anomaly rejection should not be confused with a problem known
as ``classification with reject option" \citep{chow70}. Despite the name,
that problem is {\em not} concerned with rejection of anomalous instances.
Rather, the classifier is allowed to {\em abstain} from labeling instances
that are ambiguous, that is, near the boundary between two observed
classes. The objective in that problem is to minimize the error rate
conditioned on a label being assigned.

The framework of ``zero-shot learning" can correctly classify previously 
unobserved classes, provided that additional semantic information about 
those classes is also available \citep{palatucci09}. The framework of 
\citet{gornitz13} develops semi-supervised one-class classifiers that 
leverage unlabeled data and are capable of rejecting anomalies, but no 
consistency result is known. In the binary case ($M=2$), MCAR amounts to 
learning with positive and unlabeled examples (LPUE). Consistency for LPUE 
can be established with respect to the Neyman-Pearson criterion 
\citep{blanchard10}, but this analysis has not been extended to other 
performance measures or the multiclass setting. In the next section we 
recount a key contribution of \citet{blanchard10} that enables our own.

\section{Mixture Proportion Estimation}

We will show that class proportion estimation reduces to mixture proportion estimation, which is now reviewed.
Let $(\sX, \mathfrak{S})$ be a measurable space, and
let $F$, $G$, and $H$ be distributions on $\sX$ such that
\begin{equation}
\label{eqn:mpe}
F = (1-\nu) G + \nu H
\end{equation}
where $0 \le \nu \le 1$. Mixture proportion estimation is the following
problem:
given iid training samples
of
sizes $m$ and $n$ from $F$ and $H$ respectively, and no information
about $G$, estimate $\nu$. This problem was first addressed in a
distribution-free framework by \citet{blanchard10} and later applied to the problem of classification
with label noise \citep{scott13}. In this section, we relate the
necessary results from \citet{blanchard10} while following the notation of \citet{scott13}.

Without additional assumptions, $\nu$ is not an identifiable parameter.
Indeed, if $F = (1-\nu) G +\nu H$\, holds, then any alternate
decomposition of the form $F = (1-\nu+\delta) G' + (\nu-\delta) H $\,,
with $G' = (1-\nu+\delta)^{-1}((1-\nu) G + \delta H)$\,, and $\delta \in
[0,\nu)$\,, is also valid. With no knowledge of $G$\,, we cannot decide
which representation is the correct one. Therefore, the idea is to impose
a condition on $G$ such that $\nu$ becomes identifiable. Toward this end,
the following definition is introduced.

\begin{defn}
Let $G$\,, $H$ be probability distributions. $G$ is said to be {\em
irreducible} with respect to $H$ if there exists no decomposition of the
form $G = \gamma H + (1-\gamma) F' $, where $F'$ is some probability
distribution and $0< \gamma \leq 1$\,.
\end{defn}

Some commentary on this definition is offered below. The
following was established in \citet{blanchard10}.

\begin{prop}
\label{prop:canondecmp}
Let $F$\,, $H$ be probability distributions.
If $F \neq H$, there is a unique $\nu^*\in[0,1)$ and $G$ such that the
decomposition $F = (1-\nu^* ) G+ \nu^* H$ holds, and such that $G$ is
irreducible with respect to $H$\,. If we additionally
define $\nu^*=1$ when $F = H$, then in all cases,
\begin{align*}
\nu^* := \max\{\alpha \in[0,1]: &
\, \exists \text{ a distribution $G'$ s.t. } \\
 & F = (1-\alpha)G' + \alpha H \}\,.
\end{align*}
\end{prop}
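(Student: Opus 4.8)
The plan is to argue directly with the candidate set
$A := \{\alpha \in [0,1] : \text{there is a distribution } G' \text{ with } F = (1-\alpha)G' + \alpha H\}$,
and to establish four facts in sequence: (i) the supremum of $A$ is attained, so that $\nu^* := \max A$ is well defined; (ii) the distribution $G$ associated with this maximal $\alpha$ is irreducible with respect to $H$; (iii) this irreducible decomposition is unique; and (iv) $\nu^* = 1$ exactly when $F = H$, and $\nu^* \in [0,1)$ otherwise. Proving these together yields both assertions of the proposition at once, since the formula for $\nu^*$ and the convention $\nu^*=1$ when $F=H$ are precisely (i) and (iv).

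The key step, and the technical heart of the argument, is a reformulation of membership in $A$. For $\alpha < 1$, I claim $\alpha \in A$ if and only if the signed measure $F - \alpha H$ is nonnegative, that is, $(F-\alpha H)(S) \ge 0$ for every measurable $S$: the forward direction is immediate because $F - \alpha H = (1-\alpha)G' \ge 0$, and conversely, if $F - \alpha H \ge 0$ then it has total mass $1-\alpha$ and normalizes to a probability distribution $G' := (F-\alpha H)/(1-\alpha)$ witnessing $\alpha \in A$; separately, $1 \in A$ iff $F = H$. The value of this reformulation is that it converts ``existence of a decomposition'' into a family of numerical inequalities indexed by measurable sets, and numerical inequalities are stable under limits with no limiting of measures involved. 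Thus $0 \in A$ (take $G'=F$), $A$ is an interval containing $0$, and if $\alpha_k \in A$ with $\alpha_k \uparrow \alpha^\infty < 1$ then $F(S) \ge \alpha_k H(S)$ for all $k,S$, hence $F(S) \ge \alpha^\infty H(S)$ for all $S$, so $\alpha^\infty \in A$. When $F = H$ one checks $A = [0,1]$, giving $\nu^* = 1$; when $F \ne H$, a sequence in $A$ approaching $1$ would force $F(S) \ge H(S)$ for every $S$, and complementing gives $F(S) \le H(S)$, so $F = H$, a contradiction — hence $\nu^* := \max A \in [0,1)$. Put $G := (F - \nu^* H)/(1-\nu^*)$.

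Next I would check that this $G$ is irreducible with respect to $H$. If not, write $G = \gamma H + (1-\gamma)F'$ for some distribution $F'$ and $\gamma \in (0,1]$, and substitute into $F = (1-\nu^*)G + \nu^* H$ to obtain $F = \alpha' H + (1-\alpha')F'$ with $\alpha' := \nu^* + (1-\nu^*)\gamma > \nu^*$. If $\gamma < 1$ then $\alpha' < 1$ and $F'$ is a distribution, so $\alpha' \in A$, contradicting maximality of $\nu^*$; if $\gamma = 1$ then $G = H$, whence $F = H$, contradicting $F \ne H$. Either way we reach a contradiction, so $G$ is irreducible.

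Finally, for uniqueness, suppose $F = (1-\nu')G' + \nu' H$ with $G'$ irreducible and $\nu' \in [0,1)$. Then $\nu' \in A$, so $\nu' \le \nu^*$; and if $\nu' < \nu^*$, solving $(1-\nu')G' = F - \nu' H = (1-\nu^*)G + (\nu^*-\nu')H$ for $G'$ gives $G' = \tfrac{1-\nu^*}{1-\nu'}G + \tfrac{\nu^*-\nu'}{1-\nu'}H$, a convex combination placing positive weight $\tfrac{\nu^*-\nu'}{1-\nu'} \in (0,1)$ on $H$, contradicting irreducibility of $G'$. Hence $\nu' = \nu^*$, and then $G' = (F - \nu^* H)/(1-\nu^*) = G$. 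The only genuinely delicate point in the whole argument is the attainment of the supremum in the second step — everything else is bookkeeping with convex combinations — and that is exactly what the signed-measure reformulation is engineered to make routine.
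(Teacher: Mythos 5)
Your proof is correct. Note first that the paper does not actually prove this proposition --- it is imported verbatim from \citet{blanchard10} (``The following was established in...''), so there is no in-paper argument to compare against; your write-up is a self-contained proof of a cited result. Each of your four steps checks out: the reformulation ``$\alpha \in A$ iff $F - \alpha H \ge 0$ (for $\alpha < 1$)'' is exactly right, and it is in fact the same device underlying the identity $\nu^*(F,H) = \inf_{A \in \mathfrak{S}} F(A)/H(A)$ that the paper quotes just after the proposition, since $F - \alpha H \ge 0$ on all measurable sets is precisely $\alpha \le \inf_A F(A)/H(A)$. The attainment argument (pointwise inequalities pass to the limit, so $\sup A \in A$ when $\sup A < 1$; and $\sup A = 1$ with $F \ne H$ is ruled out by complementation), the irreducibility of $G := (F - \nu^* H)/(1-\nu^*)$ via the substitution $\alpha' = \nu^* + (1-\nu^*)\gamma > \nu^*$ with the $\gamma = 1$ case handled separately, and the uniqueness step exhibiting $G' = \tfrac{1-\nu^*}{1-\nu'}G + \tfrac{\nu^*-\nu'}{1-\nu'}H$ as a forbidden decomposition when $\nu' < \nu^*$, are all sound and complete. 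This is essentially the argument in the cited source, and nothing is missing.
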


By this result, the following is well-defined.
\begin{defn}
For any two probability distributions $F$, $H$, define
\begin{align*}
\nu^*(F,H) := \max\{\alpha \in[0,1]: &
\, \exists \text{ a distribution $G'$ s.t. } \\
 & F = (1-\alpha)G' + \alpha H \}\,.
\end{align*}
\end{defn}

Thus, $G$ is irreducible with respect to $H$ if and only if $\nu^*(G, H) =
0$. Further, it is not hard to show that for any two distributions $F$ and
$H$, $\nu^*(F,H) = \inf_{A \in \mathfrak{S}} F(A)/H(A)$ \citep{scott13}.
Similarly, when $F$ and $H$ have densities $f$ and $h$, $\nu^*(F,H)$ is
the essential infimum of $f(x)/h(x)$. These identities make it possible to
check irreducibility in different scenarios. For example, $\nu^*(G,H)=0$
whenever the support of $G$ does not contain the support of $H$. Even if
the supports are equal, irreducibility can still hold as in the case where
$g$ and $h$ are two Gaussian densities with distinct means, where the
variance of $h$ is no smaller than the variance of $g$ \citep{scott13}.

The following corollary summarizes the above and states
that irreducibility of $G$ w.r.t. $H$ is a sufficient condition
for $\nu$ in \eqref{eqn:mpe} to be identifiable.
\begin{cor}
\label{cor:irrd}
If $F = (1-\gamma) G + \gamma H$, and $G$ is irreducible with respect to
$H$, then $\gamma = \nu^*(F,H)$.
\end{cor}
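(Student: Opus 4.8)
The plan is to derive the corollary directly from Proposition~\ref{prop:canondecmp} and the max-characterization of $\nu^*(F,H)$, handling the cases $F\neq H$ and $F=H$ separately.

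First I would treat the case $F\neq H$. Here I would note at the outset that $\gamma\neq 1$, since $\gamma=1$ would force $F=(1-\gamma)G+\gamma H=H$, contrary to assumption; hence $\gamma\in[0,1)$. Proposition~\ref{prop:canondecmp} asserts that there is a \emph{unique} pair consisting of a scalar $\nu^*\in[0,1)$ and a distribution $G^*$ irreducible with respect to $H$ for which $F=(1-\nu^*)G^*+\nu^* H$. The hypotheses of the corollary exhibit exactly such a pair, namely $(\gamma,G)$, so uniqueness forces $\gamma=\nu^*$ (and incidentally $G=G^*$). The same proposition identifies this $\nu^*$ with $\max\{\alpha\in[0,1]:\exists\text{ a distribution }G'\text{ s.t. }F=(1-\alpha)G'+\alpha H\}$, which is precisely $\nu^*(F,H)$ by its definition; thus $\gamma=\nu^*(F,H)$.

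Second I would treat the degenerate case $F=H$, in which $\nu^*(F,H)=1$ by the stated convention. I would argue by contradiction: if $\gamma<1$, then $H=F=(1-\gamma)G+\gamma H$ gives $(1-\gamma)G=(1-\gamma)H$, hence $G=H$. But $G=H$ can be written as $G=1\cdot H+0\cdot F'$ for an arbitrary probability distribution $F'$, which is a decomposition of the type forbidden in the definition of irreducibility (with $\gamma'=1\in(0,1]$), contradicting that $G$ is irreducible with respect to $H$. Therefore $\gamma=1=\nu^*(F,H)$.

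I do not anticipate a genuine obstacle here: the corollary is essentially a repackaging of the uniqueness clause of Proposition~\ref{prop:canondecmp} together with the $\max$-formula for $\nu^*$. The only point that requires care is the degenerate case $F=H$, where the decomposition itself need not be unique but irreducibility nonetheless pins down $\gamma=1$; the mild subtlety is to invoke the convention $\nu^*(F,H)=1$ correctly and to verify that $G=H$ really does violate the definition of irreducibility.
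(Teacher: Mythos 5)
Your proof is correct and takes essentially the approach the paper intends: the paper states Corollary~\ref{cor:irrd} without proof as an immediate consequence of the uniqueness clause and max-characterization in Proposition~\ref{prop:canondecmp}, which is exactly the deduction you carry out. Your separate treatment of the degenerate case $F=H$ (showing $\gamma<1$ would force $G=H$, violating irreducibility, so the convention $\nu^*(F,H)=1$ applies) is a point the paper glosses over and is handled correctly.
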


\citet{blanchard10} studied an estimator $\widehat{\nu} =
\widehat{\nu}(\widehat{F},\widehat{H})$ of $\nu^*(F,H)$, where
$\widehat{F}$ and $\widehat{H}$ denote the empirical distributions based
on iid random samples from $F$ and $H$. They show in Thm. 8 that
$\widehat{\nu}$ is strongly universally consistent, i.e., for any $F$ and
$H$, $\widehat{\nu} \to \nu^*(F,H)$ in probability as the 
sample sized tend to $\infty$.\footnote{More
precisely, \citet{blanchard10} use the notation $\pi = 1-\nu$, and present
a consistent estimator for $\pi$. Furthermore, they actually establish 
almost sure convergence. As noted by \citet{scott13}, the statement of 
Thm. 8 of \citet{blanchard10} needs to be amended slightly (by 
constraining how the two sample sizes grow w.r.t. each other) for almost 
sure convergence to hold.} We will show that this estimator 
leads to consistent estimators of class probabilities. The estimator is
discussed further in Sec. \ref{sec:mpealg}.

\section{Class Proportion Estimation}
\label{sec:cpe}

In this section we apply mixture proportion estimation to CPE. Let $P_1,
\ldots, P_M$ be probability measures (distributions) on $(\sX,
\mathfrak{S})$.

\subsection{Identifiability Conditions}

As with mixture proportion estimation, class proportion estimation
requires an identifiability condition.
\begin{description}
\item[(A)] For all $i = 1, \ldots, M$, every element of $\conv
\{P_\ell \, :
\, \ell \ne i\}$ is irreducible with respect to $P_i$.
\end{description}
Here $\conv\{Q_1, \dots, Q_K\}$ denotes the set of convex combinations of
$Q_1, \ldots, Q_K$, that is, the set of mixture distributions based on
$Q_1, \ldots, Q_K$.
To illuminate {\bf (A)}, we introduce a second condition, where
$\supp(Q)$ denotes the support of distribution $Q$.
\begin{description}
\item[(B)] For all $i=1,\ldots, M$, $\supp(P_i) \nsubseteq
\cup_{\ell \ne i}
\supp(P_\ell)$.
\end{description}
{\bf (B)} clearly implies {\bf (A)} from the definition of
irreducible.

We argue that {\bf (B)} is a reasonable assumption in many real-world
classification problems, and therefore so is {\bf (A)}. In words, {\bf
(B)} means that for each class, there exist at least some instances, with
positive probability of occurring (however small), that are always
correctly classified by an optimal classifier. In other words, such
instances could not possibly be mistaken for instances of another class.
For example, consider handwritten digit recognition. Although various
classes may have overlapping supports, each class has instances
(corresponding to very clear handwriting, say) that could not possibly be
mistaken for any other class.

\subsection{Consistency in the Fully Observed Case}
\label{sec:full}

For now assume training samples from all $M$ classes are observed. Under {\bf (A)}, the proportions $\pi_i$ are identifiable, and we propose to estimate them via
\begin{equation}
\label{eqn:piest}
\pihat_i = \nuhat(\Ptesthat, \Phat_i)
\end{equation}
for $i=1, \ldots, M$, where $\nuhat$ is the estimator of \citet{blanchard10} discussed in the previous section.

\begin{prop}
\label{prop:cpe}
Under {\bf (A)}, for each $i$, $\pihat_i$ converges to $\pi_i$ in 
probability as $\min\{n_0, n_i\} \to \infty$.
\end{prop}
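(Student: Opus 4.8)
The plan is to establish the population identity $\nu^*(\Ptest, P_i) = \pi_i$ for each $i$ under \textbf{(A)}, and then invoke the consistency of the mixture proportion estimator to transfer this to the empirical quantity $\pihat_i = \nuhat(\Ptesthat, \Phat_i)$.

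First I would prove the population identity. Fix $i$ and, assuming $\pi_i < 1$, set $G_i := (1-\pi_i)^{-1} \sum_{\ell \ne i} \pi_\ell P_\ell$. I would check that the coefficients $\pi_\ell/(1-\pi_i)$ are nonnegative and sum to one, so that $G_i$ is a probability distribution and an element of $\conv\{P_\ell : \ell \ne i\}$; then $\Ptest = (1-\pi_i) G_i + \pi_i P_i$ is a decomposition in which, by \textbf{(A)}, $G_i$ is irreducible with respect to $P_i$. Corollary \ref{cor:irrd} then forces $\pi_i = \nu^*(\Ptest, P_i)$. The cases $\pi_i = 0$ (where $G_i = \Ptest$ is itself a convex combination of the other $P_\ell$, hence irreducible with respect to $P_i$, so that $\nu^*(\Ptest,P_i)=0$) and $\pi_i = 1$ (where $\Ptest = P_i$ and the convention $\nu^*(P_i,P_i)=1$ from Proposition \ref{prop:canondecmp} applies) need to be treated separately but are immediate.

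Second, I would note that $\Ptesthat$ and $\Phat_i$ are the empirical distributions of independent iid samples of sizes $\ntest$ and $n_i$ from $\Ptest$ and $P_i$ respectively, so the strong universal consistency of $\nuhat$ (Thm.~8 of \citet{blanchard10}) gives $\nuhat(\Ptesthat, \Phat_i) \to \nu^*(\Ptest, P_i)$ in probability as $\min\{\ntest, n_i\} \to \infty$. Chaining this with the population identity yields $\pihat_i \to \pi_i$ in probability, as claimed.

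Since the result is essentially a reduction, there is no substantial analytic obstacle: the cited consistency theorem does all the probabilistic work, and it delivers exactly the in-probability convergence we want with no constraint on the relative growth of $\ntest$ and $n_i$. The only points requiring care are the verification that $G_i$ genuinely lies in $\conv\{P_\ell : \ell \ne i\}$ (so that \textbf{(A)} is applicable) and the handling of the degenerate proportions $\pi_i \in \{0,1\}$, where the generic decomposition is either trivial or improper.
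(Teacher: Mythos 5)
Your proposal is correct and follows essentially the same route as the paper: decompose $\Ptest = \pi_i P_i + (1-\pi_i)Q$ with $Q \in \conv\{P_\ell : \ell \ne i\}$, use \textbf{(A)} and Corollary~\ref{cor:irrd} to identify $\pi_i = \nu^*(\Ptest, P_i)$, and conclude by consistency of $\nuhat$. The only difference is that you explicitly treat the degenerate cases $\pi_i \in \{0,1\}$, which the paper's one-line argument passes over silently; this is a minor but welcome extra care, not a different method.
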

\begin{proof}
WLOG assume $i=1$. Now $\Ptest = \pi_1 P_1
+ (1-\pi_1)Q$ where $Q \in \conv\{P_\ell : \ell \ne 1\}$. Under
{\bf (A)}, $\nu^*(Q,P_1)
= 0$, and therefore by Corollary \ref{cor:irrd}, $\pi_1 = \nu^*(\Ptest,
P_1)$. The result now follows by convergence in probability of
$\nuhat(\Ptesthat, \Phat_1)$ to $\nu^*(\Ptest, P_1)$.
\end{proof}

When $M=2$, {\bf (A)} says
$\nu^*(P_1,P_2) = 0$ and $\nu^*(P_2,P_1) =0$. This is the so-called {\em
mutual irreducibility} assumption adopted by \citet{scott13} in the context
of label noise. It turns out that when $M=2$ we can consistently estimate the
proportions under a weaker condition, namely,
$P_1 \ne P_2$. To achieve this, we employ the following estimators:
$$
\pihat_1' := \frac{1 - \nuhat(\Phat_0,\Phat_2)}{1 -
\nuhat(\Phat_1,\Phat_2)}, \ \ \ \
\pihat_2' := \frac{1 - \nuhat(\Phat_0,\Phat_1)}{1 -
\nuhat(\Phat_2,\Phat_1)}.
$$
The intuition is that in the binary case, even if {\bf (A)} is violated,
say $\nu^*(P_1, P_2) > 0$, we can use mixture proportion estimation to
estimate $\nu^*(P_1, P_2)$, and rescale the estimates
accordingly.
Note that each of these modified estimators uses all three
samples, and therefore this result does not generalize to the case where
one class is unobserved.

\begin{prop}
\label{prop:binary}
If $M=2$ and $P_1 \ne P_2$, then $\pihat_1' \to \pi_1$ in probability and
$\pihat_2' \to \pi_2$ in probability, as $\min\{\ntest, n_1,
n_2\} \to \infty$.
\end{prop}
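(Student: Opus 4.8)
The plan is to combine the strong universal consistency of $\nuhat$ with the continuous mapping theorem, after first identifying the deterministic limits of the building blocks of $\pihat_1'$ and $\pihat_2'$. By Thm.~8 of \citet{blanchard10}, $\nuhat(\Phat_0,\Phat_2) \to \nu^*(P_0,P_2)$ in probability as $\min\{\ntest,n_2\} \to \infty$ and $\nuhat(\Phat_1,\Phat_2) \to \nu^*(P_1,P_2)$ in probability as $\min\{n_1,n_2\} \to \infty$, and similarly for the two $\nuhat$ terms in $\pihat_2'$. Convergence in probability of each coordinate to a constant gives joint convergence of the pair, so it remains to establish (i) $\nu^*(P_1,P_2) < 1$ and $\nu^*(P_2,P_1) < 1$, which makes the map $(a,b)\mapsto (1-a)/(1-b)$ continuous at the relevant limit points, and (ii) the identities
$$ \frac{1-\nu^*(P_0,P_2)}{1-\nu^*(P_1,P_2)} = \pi_1, \qquad \frac{1-\nu^*(P_0,P_1)}{1-\nu^*(P_2,P_1)} = \pi_2. $$
Granting (i) and (ii), the continuous mapping theorem applied to $(\nuhat(\Phat_0,\Phat_2),\nuhat(\Phat_1,\Phat_2))$ yields $\pihat_1' \to \pi_1$ in probability, and symmetrically $\pihat_2' \to \pi_2$, under the combined requirement $\min\{\ntest,n_1,n_2\} \to \infty$. (The event that the empirical denominator $1-\nuhat(\Phat_1,\Phat_2)$ equals $0$, where $\pihat_1'$ is undefined, has probability tending to $0$ by (i), so it does not obstruct convergence in probability.)

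Claim (i) is immediate from Proposition~\ref{prop:canondecmp}: $\nu^*(F,H)=1$ exactly when $F=H$, so $P_1 \ne P_2$ forces $\nu^*(P_1,P_2), \nu^*(P_2,P_1) \in [0,1)$.

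For (ii) I will derive the first identity; the second follows by interchanging the indices $1$ and $2$. Set $\gamma := \nu^*(P_1,P_2) < 1$. By Proposition~\ref{prop:canondecmp} applied to $(P_1,P_2)$ there is a distribution $G$, irreducible with respect to $P_2$, with $P_1 = (1-\gamma)G + \gamma P_2$. Substituting into $P_0 = \pi_1 P_1 + (1-\pi_1)P_2$ and collecting the $G$ and $P_2$ terms gives
$$ P_0 = \pi_1(1-\gamma)\,G + \bigl(1 - \pi_1(1-\gamma)\bigr)P_2, $$
which is a decomposition of $P_0$ with coefficients in $[0,1]$ summing to $1$, second mixand $P_2$, and first mixand $G$ still irreducible with respect to $P_2$ (it is the same $G$). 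Corollary~\ref{cor:irrd} then identifies the coefficient of $P_2$: $\nu^*(P_0,P_2) = 1 - \pi_1(1-\gamma)$, hence $1 - \nu^*(P_0,P_2) = \pi_1(1-\gamma) = \pi_1\bigl(1-\nu^*(P_1,P_2)\bigr)$, which is the claimed identity. The degenerate cases $\pi_1 = 0$ and $\pi_1 = 1$ give $P_0 = P_2$ and $P_0 = P_1$ and can be checked directly against the formula.

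I expect the only real subtlety to be the bookkeeping in step (ii): putting the decomposition of $P_0$ into the exact form required by Corollary~\ref{cor:irrd} and noting that irreducibility of $G$ with respect to $P_2$ carries over unchanged when $G$ reappears inside the mixture defining $P_0$. Everything else — joint convergence in probability, continuity of the ratio map away from denominator zero, and the continuous mapping theorem — is routine.
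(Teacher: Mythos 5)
Your proof is correct and follows essentially the same route as the paper's: you apply Proposition~\ref{prop:canondecmp} to write $P_1 = (1-\gamma)G + \gamma P_2$ with $G$ irreducible w.r.t.\ $P_2$, substitute into $P_0$ to get the coefficient $1-\pi_1(1-\gamma) = \pi_1\gamma + (1-\pi_1)$ on $P_2$, identify it as $\nu^*(P_0,P_2)$ via Corollary~\ref{cor:irrd}, and conclude by consistency of $\nuhat$ and continuity of the ratio map. The only differences are that you spell out the routine probabilistic details (joint convergence, nonvanishing denominator, degenerate $\pi_1$) slightly more explicitly than the paper does.
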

\begin{proof}
Consider estimation of $\pi_1$. Denote $\nu_{12} = \nu^*(P_1,P_2)$. By
Proposition \ref{prop:canondecmp}, there exists a unique distribution
$E_1$ such that $P_1 = (1-\nu_{12}) E_1 + \nu_{12} P_2$ and $\nu(E_1, P_2)
= 0$. Then
\begin{align*}
\Ptest &= \pi_1 [(1-\nu_{12}) E_1 + \nu_{12} P_2] + (1-\pi_1) P_2 \\
&= \pi_1 (1-\nu_{12}) E_1 + [\pi_1 \nu_{12} + (1-\pi_1)] P_2.
\end{align*}
Since $\nu(E_1, P_2) = 0$, by Corollary \ref{cor:irrd} we must have
$\nu^*(\Ptest, P_2) = \pi_1 \nu_{12} + (1-\pi_1)$. Solving for $\pi_1$
yields $\pi_1 = \frac{1-\nu^*(\Ptest, P_2)}{1-\nu^*(P_1, P_2)}$.
Since $P_1 \ne P_2$, the denominator is nonzero. The result now follows by
consistency of $\nuhat$ and continuity of division.
\end{proof}


\subsection{Consistent CPE with an Unobserved Class}
\label{sec:partial}

The primary advantage of our approach to CPE is that it can consistently 
estimate all proportions, even $\pi_M$, when $n_M=0$. The estimators 
$\pihat_i$ of Eqn. \eqref{eqn:piest} do not depend on $\widehat{P}_M$ when 
$i < M$, so they can remain the same in this setting. For 
$i=M$, we can just set $\pihat_M := 1 - \sum_{i=1}^{M-1} \pihat_i$.
The following is an immediate consequence of the necessary condition 
$\sum_{i=1}^M \pi = 1$ and the consistency of $\pihat_1, \ldots, 
\pihat_{M-1}$.
\begin{cor}
Consider class proportion estimation where $n_M = 0$. Let $\pihat_i$ be as 
in Eqn. \eqref{eqn:piest} for $i = 1, \ldots, M-1$, and set  $\pihat_M = 1 
-\sum_{i=1}^{M-1} \pihat_i$. Under {\bf (A)}, for each $i = 1, 
\ldots, M$, $\pihat_i$ converges to
$\pi_i$ in probability as $\min\{n_0, n_1, \ldots, n_{M-1}\} \to \infty$.
\end{cor}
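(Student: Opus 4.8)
The plan is to reduce the statement to Proposition~\ref{prop:cpe} together with a standard fact about convergence in probability, so the argument is essentially bookkeeping. First I would observe that for each $i \in \{1, \ldots, M-1\}$ the estimator $\pihat_i = \nuhat(\Ptesthat, \Phat_i)$ of Eqn.~\eqref{eqn:piest} depends only on the test sample and the $i$th training sample, and in particular uses no data from class $M$; hence the setting $n_M = 0$ does not affect these estimators at all. Under {\bf (A)}, Proposition~\ref{prop:cpe} then applies verbatim and gives $\pihat_i \to \pi_i$ in probability as $\min\{n_0, n_i\} \to \infty$. Since $\min\{n_0, n_1, \ldots, n_{M-1}\} \to \infty$ forces $\min\{n_0, n_i\} \to \infty$ for every such $i$, we obtain, under the stated limit, that each of $\pihat_1, \ldots, \pihat_{M-1}$ converges in probability to its target.

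Next I would handle $\pihat_M$. Because $\pi_1, \ldots, \pi_M$ are the mixing weights in \eqref{eqn:testdata}, they satisfy $\sum_{i=1}^M \pi_i = 1$, so $\pi_M = 1 - \sum_{i=1}^{M-1} \pi_i$, and the definition $\pihat_M := 1 - \sum_{i=1}^{M-1} \pihat_i$ is the empirical analogue. Thus $\pihat_M - \pi_M = -\sum_{i=1}^{M-1}(\pihat_i - \pi_i)$, and it remains only to show that a finite sum of terms, each converging to $0$ in probability, itself converges to $0$ in probability. This follows from a union bound: for any $\eps > 0$ we have $\{\,|\sum_{i=1}^{M-1}(\pihat_i - \pi_i)| > \eps\,\} \subseteq \bigcup_{i=1}^{M-1} \{\,|\pihat_i - \pi_i| > \eps/(M-1)\,\}$, and each event on the right has probability tending to $0$ under the limit $\min\{n_0, n_1, \ldots, n_{M-1}\} \to \infty$, hence so does the left-hand probability. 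This establishes $\pihat_M \to \pi_M$ in probability and completes the proof.

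There is essentially no obstacle here; as the statement of the corollary already signals, the result is an immediate consequence of Proposition~\ref{prop:cpe} and the linear constraint $\sum_i \pi_i = 1$. The only point worth a word of care is that convergence in probability is not automatically preserved under addition of sequences unless the number of summands is fixed --- which it is, namely $M-1$ --- so I would make this explicit via the union bound above rather than appeal to a continuous mapping theorem. Alternatively one could note that $(\pihat_1, \ldots, \pihat_{M-1}) \to (\pi_1, \ldots, \pi_{M-1})$ in probability in $\R^{M-1}$ and apply continuity of the map $(t_1, \ldots, t_{M-1}) \mapsto 1 - \sum_{i=1}^{M-1} t_i$, but verifying that joint convergence again reduces to the same union bound.
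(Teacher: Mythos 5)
Your proof is correct and follows exactly the route the paper intends: the paper gives no separate argument, simply declaring the corollary an immediate consequence of Proposition~\ref{prop:cpe} and the constraint $\sum_{i=1}^M \pi_i = 1$, which is precisely what you have spelled out (including the union-bound detail for the sum, which the paper leaves implicit).
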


\section{Anomaly Rejection}

We now turn our attention to the design of a consistent discrimination
rule for MCAR. In this setting, available data consist of iid random
samples from $P_1, \ldots, P_{M-1}$ as in \eqref{eqn:traindata}, and an
iid random sample from $P_0$ as in \eqref{eqn:testdata}. Data from $P_M$
are not observed. Our goal is a discrimination rule $\widehat{f}$,
constructed from the available data, whose risk converges to the Bayes
risk as the various sample sizes tend to $\infty$. Note that previous work
has not addressed this problem even in the case where all classes are
observed (which still differs from standard classification 
because the test distribution has different class proportions).

To set notation, let $Q$ denote the joint distribution of $(X,Y) \in \sX
\times \{1, \ldots, M\}$ such that the $X$-marginal of $Q$ is $P_0$, the
$Y$-marginal is given by the $\pi_i$, and the class-conditional
distributions are $P_i$. For any classifier $f:\sX \to \{1, \ldots, M\}$,
denote the class-conditional error probabilities
$R_{i}(f) := P_i(\{x: f(x) \ne i\})$,
and  the test-distribution risk $R(f) := Q(\{(x,y):
f(x) \ne y\}) = \sum_{i=1}^M \pi_i R_i(f)$. Let $R^*$ denote the Bayes
risk for distribution $Q$. Our goal is to construct a discrimination rule
$\widehat{f}$ such that $R(\widehat{f}) \to R^*$ in probability as the
sample sizes $n_0, n_1, \ldots, n_{M-1}$ tend to $\infty$.

To construct such a rule, we adapt a classic strategy from statistical
learning theory \citep{devroye96}: empirical risk minimization (ERM) over a
growing family of classifiers, also known as sieve estimation. This strategy relies upon VC theory, and
since we are in a multiclass setting, we take the following generalization
of VC dimension to multiclass. Define the (multiclass) VC dimension of a
set of classifiers $\sF$ to be the maximum conventional (two-class) VC
dimension \citep{devroye96} of the family of sets $\{x : f(x) \ne \ell\}_{f
\in \sF}$, over $\ell = 1, \ldots, M$.

As its name suggests, ERM also requires an estimate of the risk. We
propose to estimate $R(f)$ by writing $R(f) = \sum_{i=1}^{M-1} \pi_i
R_i(f) + \RMul(f)$, where $\RMul(f) := Q(\{(x,y) : f(x) \ne y, y
= M\}) = \pi_M R_M(f)$, and estimating each term in this expression.
For $i <M$, $\pi_i$ is estimated by $\pihat_i$ in Eqn.
\eqref{eqn:piest}, and $R_i(f)$ is estimated by
$\Rhat_i(f) := \frac1{n_i} \sum_{j=1}^{n_i} \ind{f(X_j^i) \ne i}$.
An estimate of $\RMul(f)$ is motivated as follows. Let $R_{i M}(f) :=
P_i(\{x: f(x) \ne M\})$ and observe that $R_{0M}(f) =
\sum_{i=1}^{M-1} \pi_i R_{iM}(f) + \pi_M R_M(f)$. Then
\begin{equation}
\label{eqn:RMul}
\mbox{$\RMul(f) = R_{0M}(f) - \sum_{i=1}^{M-1} \pi_i R_{iM}(f)$}.
\end{equation}
Plugging in $\Rhat_{i M}(f) := \frac1{n_i}
\sum_{j=1}^{n_i} \ind{f(X_j^i) \ne M}$ and our estimates for the
$\pi_i$ leads to the following estimator:
\begin{equation}
\label{eqn:RMhatul}
\mbox{$\RMhatul(f) = \Rhat_{0M}(f) -
\sum_{i=1}^{M-1} \pihat_i \Rhat_{iM}(f)$}.
\end{equation}
Now set $\Rhat(f) :=
\sum_{i=1}^{M-1} \pihat_i \Rhat_i(f) + \RMhatul(f)$.

We now define the ERM-based discrimination rule.
Let $( \sF_k )_{k\ge1}$ be a sequence of VC
classes with corresponding (multiclass) VC dimensions $V_k < \infty$. Let
$\tau_k$ be any sequence of positive numbers tending to zero.
Let $\fhat_k$ be an approximate empirical risk minimizer, i.e., any 
classifier
$$
\fhat_k \in \Big\{f \in \sF_k \, : \, \Rhat(f) \le \inf_{f' \in
\sF_k} \Rhat(f') + \tau_k\Big\}.
$$
The introduction of $\tau_k$ lets us avoid assuming the existence of an
empirical risk minimizer.
Denote $\nb := (n_0, n_1, \ldots, n_{M-1})$. We write $\nb \to
\infty$ to indicate $\min \{n_0, n_1, \ldots, n_{M-1}\} \to \infty$. Let
$k(\nb)$ denote a sequence of positive integers indexed by $\nb$. Finally,
define the discrimination rule $\fhat := \fhat_{k(\nb)}$. Note that the
sequences $(\sF_k)_{k\ge 1}$ and $k(\nb)$ are user-specified and must grow
in a certain way, indicated by the theory below, for $\fhat$ to be
consistent.

Analysis of this discrimination rule hinges on
uniform control of the deviation $|R(f) - \Rhat(f)|$ over $\sF_{k(\nb)}$
as $\nb \to \infty$. The following result establishes this property. In
the proof, the error deviance is decomposed in such a way that uniform
control follows from the multiclass VC extension and consistency of the
class proportion estimators. The proof of this and the next result are
found in the supplemental material.

\begin{prop}
Assume {\bf (A)} holds and suppose $k(\nb) \to \infty$ as $\nb \to \infty$
such that
\begin{equation}
\label{eqn:vccond}
\frac{V_{k(\nb)} \log n_i}{n_i} \to 0,
\end{equation}
for $0 \le i \le M-1$. Then
$$
\sup_{f \in \sF_{k(\nb)}} |R(f) - \Rhat(f)| \to 0
$$
in probability as $\nb \to \infty$.
\end{prop}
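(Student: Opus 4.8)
The plan is to bound the deviation $|R(f) - \Rhat(f)|$ by splitting it according to the decomposition $R(f) = \sum_{i=1}^{M-1} \pi_i R_i(f) + \RMul(f)$ and the matching expression for $\Rhat(f)$ in terms of $\pihat_i$, $\Rhat_i(f)$, $\Rhat_{iM}(f)$, and $\Rhat_{0M}(f)$. Writing out both sides, $R(f) - \Rhat(f)$ becomes a finite sum of terms of three kinds: (i) $\pi_i(R_i(f) - \Rhat_i(f))$ and $\pi_i(R_{iM}(f) - \Rhat_{iM}(f))$; (ii) $R_{0M}(f) - \Rhat_{0M}(f)$; and (iii) $(\pi_i - \pihat_i)\Rhat_i(f)$ and $(\pi_i - \pihat_i)\Rhat_{iM}(f)$, coming from the error in the class-proportion estimates. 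The strategy is to control the supremum over $f \in \sF_{k(\nb)}$ of the absolute value of each such term separately and then invoke the triangle inequality, using that there are only $M$ classes so finitely many terms.

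For terms of type (i) and (ii), I would apply a uniform convergence result from multiclass VC theory. Each quantity $\sup_{f \in \sF_k} |R_i(f) - \Rhat_i(f)|$ is a supremum over the family of sets $\{x : f(x) \ne i\}_{f \in \sF_k}$ of the deviation between a probability $P_i(A)$ and its empirical estimate from the $n_i$-sample; by the definition of multiclass VC dimension this family has two-class VC dimension at most $V_k$, so the classical Vapnik–Chervonenkis inequality (as in \citet{devroye96}) gives a bound of order $\sqrt{V_k \log n_i / n_i}$ (up to the usual constants/logarithms), which tends to $0$ in probability under \eqref{eqn:vccond}. The same argument handles $\sup_f |R_{iM}(f) - \Rhat_{iM}(f)|$ (sets $\{x : f(x) \ne M\}$, again VC dimension $\le V_k$) and $\sup_f |R_{0M}(f) - \Rhat_{0M}(f)|$ (same sets, but estimated from the $n_0$-sample). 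Since the $\pi_i$ are constants in $[0,1]$, multiplying by them only shrinks the bound.

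For terms of type (iii), the key observation is that the factor $\pi_i - \pihat_i$ does not depend on $f$, while the factors $\Rhat_i(f)$ and $\Rhat_{iM}(f)$ are trivially bounded by $1$. Hence $\sup_{f \in \sF_{k(\nb)}} |(\pi_i - \pihat_i)\Rhat_i(f)| \le |\pi_i - \pihat_i|$, and by Proposition \ref{prop:cpe} this converges to $0$ in probability as $\nb \to \infty$ (note $\pihat_i = \nuhat(\Ptesthat, \Phat_i)$ depends only on the $n_0$- and $n_i$-samples, and $\min\{n_0, n_i\} \to \infty$ under $\nb \to \infty$). This is where assumption \textbf{(A)} enters, since it is needed for Proposition \ref{prop:cpe}. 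Combining the type-(iii) estimates is therefore immediate; no uniform control is actually required there because the $f$-dependence is already uniformly bounded.

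Assembling the pieces: $\sup_{f \in \sF_{k(\nb)}} |R(f) - \Rhat(f)|$ is at most a sum of finitely many terms, each shown to converge to $0$ in probability, so the sum does as well. The main obstacle is the type-(i)/(ii) analysis, specifically making the multiclass VC argument precise: one must check that the relevant families of sets indexed by $\sF_k$ genuinely have two-class VC dimension bounded by $V_k$ (which is exactly how the paper has defined multiclass VC dimension, so this is really bookkeeping), and one must be careful that condition \eqref{eqn:vccond} is applied with the correct index $i$ for each sample size $n_i$ and that the finite union over $i \in \{0, 1, \ldots, M-1\}$ of vanishing probabilities is handled cleanly. Everything else — the algebraic decomposition and the type-(iii) bound — is routine given the earlier results.
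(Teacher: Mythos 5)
Your proposal is correct and follows essentially the same route as the paper's proof: the same algebraic decomposition into empirical-deviation terms (controlled uniformly via the two-class VC inequality applied to the families $\{x: f(x)\ne \ell\}_{f\in\sF_k}$, whose VC dimension is at most $V_k$ by definition of the multiclass VC dimension, together with condition \eqref{eqn:vccond}) and proportion-estimation terms $|\pi_i - \pihat_i|$ (controlled by Proposition \ref{prop:cpe} under \textbf{(A)}, after bounding $\Rhat_i(f), \Rhat_{iM}(f)$ by $1$). No substantive differences.
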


So that arbitrary classifiers can be accurately approximated,
we choose $(\sF_k)_{k\ge 1}$ satisfying the following universal
approximation property: For any joint distribution $Q$ on $\sX \times
\{1, \ldots, M\}$,
$$
\lim_{k \to \infty} \inf_{f \in \sF_k} R(f) = R^*
$$
where $R^*$ is the Bayes error corresponding to $Q$.
\citet{devroye96} give examples of families of VC
classes that satisfy the
above approximation property. We can now state the main result of this
section.
\begin{thm}
Assume {\bf (A)} holds and that $(\sF_k)_{k\ge 1}$ is chosen to satisfy
the universal approximation property above. Further suppose $k(\nb)$ is
chosen such that as $\nb \to \infty$,
$k(\nb) \to \infty$ and \eqref{eqn:vccond} holds for $0 \le i \le M-1$.
Then $R(\fhat) \to R^*$ in probability.
\end{thm}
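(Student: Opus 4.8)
The plan is to follow the classical sieve/ERM template from \citet{devroye96}, now equipped with the two ingredients the paper has just assembled: the uniform deviation bound of the previous proposition, and the universal approximation property of $(\sF_k)_{k\ge 1}$. The standard decomposition of the excess risk of $\fhat = \fhat_{k(\nb)}$ is into an \emph{estimation error} and an \emph{approximation error}:
$$
R(\fhat) - R^* = \Big( R(\fhat) - \inf_{f \in \sF_{k(\nb)}} R(f) \Big) + \Big( \inf_{f \in \sF_{k(\nb)}} R(f) - R^* \Big).
$$
Since $k(\nb) \to \infty$ as $\nb \to \infty$, the universal approximation property forces the second (deterministic) term to $0$. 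So the work is entirely in showing the first term tends to $0$ in probability.

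For the estimation error, I would bound it by $2\sup_{f \in \sF_{k(\nb)}} |R(f) - \Rhat(f)|$ plus the slack $\tau_{k(\nb)}$, via the usual argument: pick (for each sample) a near-minimizer $g$ of the true risk over $\sF_{k(\nb)}$, and write
$$
R(\fhat) - \inf_{f \in \sF_{k(\nb)}} R(f) \le \big( R(\fhat) - \Rhat(\fhat) \big) + \big( \Rhat(\fhat) - \Rhat(g) \big) + \big( \Rhat(g) - R(g) \big) + \big( R(g) - \inf_{f} R(f) \big).
$$
The first and third differences are each at most $\sup_{f} |R(f) - \Rhat(f)|$; the second is at most $\tau_{k(\nb)}$ by the definition of the approximate empirical risk minimizer; and the fourth can be made arbitrarily small by choice of $g$. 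Since $\tau_k \to 0$ and, by the preceding proposition, $\sup_{f \in \sF_{k(\nb)}} |R(f) - \Rhat(f)| \to 0$ in probability under assumption \textbf{(A)} and condition \eqref{eqn:vccond}, the estimation error tends to $0$ in probability. Combining the two pieces gives $R(\fhat) \to R^*$ in probability.

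I do not expect a substantive obstacle here: the genuinely hard analytic content — controlling $\sup_{f} |R(f) - \Rhat(f)|$ despite the plug-in proportion estimates $\pihat_i$ and the indirect estimator $\RMhatul$ of $\pi_M R_M(f)$, using multiclass VC theory together with consistency of the $\pihat_i$ — has already been dispatched in the previous proposition. The only mild care needed is bookkeeping: ensuring that $k(\nb)$ is indeed driven to infinity (so approximation error vanishes) while \eqref{eqn:vccond} still holds (so the uniform deviation vanishes), which is exactly the compatibility the hypotheses assume, and confirming that convergence in probability is preserved through the finite sum of the error terms above (a union bound suffices). One should also note that the supremum over $\sF_{k(\nb)}$ dominates the relevant quantities even though $\fhat$ and $g$ are data-dependent, since both lie in $\sF_{k(\nb)}$ — the standard justification for why a uniform bound is what is needed.
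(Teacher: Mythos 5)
Your proposal is correct and follows essentially the same route as the paper's proof: the same estimation/approximation decomposition, the approximation term killed by the universal approximation property and $k(\nb)\to\infty$, and the estimation term bounded by $2\sup_{f\in\sF_{k(\nb)}}|R(f)-\Rhat(f)|+\tau_{k(\nb)}$ plus an arbitrarily small slack from the choice of a near-minimizer of the true risk, with the supremum controlled by the preceding proposition. No gaps.
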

Although we have focused on the probability of error as a performance
measure, it would not be difficult to adapt this result to any other
performance measure that is a continuous function of the class proportions
$\pi_i$ and class-conditional errors $R_i$, such as a cost-sensitive Bayes
risk or the minmax error.

\section{Implementation and Experiments}
\label{sec:impexp}

In this section we introduce a practical algorithm for mixture proportion estimation (MPE) and use it to
implement the proposed CPE methodology. We then compare our method to
existing methods for CPE on a variety of binary and multiclass data sets.
We consider two experimental settings. In the first setting, we adopt the assumption that the unlabeled
test data do not contain an anomalous class. This is the assumption adopted by competing methods and, not surprisingly, we find that
they outperform our own approach, which allows for the existence of an
anomalous class in the test data. In the second group of experiments, the test
data contain an anomalous class, and our approach vastly outperforms the
competitors in this scenario.

For a fairer head-to-head comparison with existing methods, we introduce
two additional class proportion estimators based on MPE that make the same
assumptions as competing methods (namely, that there is not an anomalous class
in the test data). We compare these to existing methods under the first
experimental setting and find they are competitive, which offers
experimental validation of the MPE-based framework.

A thorough experimental investigation of MCAR is beyond the scope of this
work. The discrimination rule we introduce for MCAR could be implemented
for various VC classes such as histograms or decision trees, but other methods
would also be worthy of exploration, such as those based on convex
surrogate losses.

\subsection{Practical Algorithm for MPE}
\label{sec:mpealg}

As discussed in \citet{scott13}, Theorem 6 of \citet{blanchard10} tells us $\nu^* = \nu^*(F,H)$ is related to the
optimal Receiver Operating Characteristic (ROC) that arises when the
distribution
$H$ is viewed as the null hypothesis and $F$ as the alternative.
This optimal ROC is the function\footnote{Technically, if
the function is not concave, the optimal ROC is the smallest concave
function that upper bounds $p(\alpha)$.}
\begin{align*}
p(\alpha):= \sup_{C \subset \sX} & \, F(C) \\
\mbox{s.t.} & \, H(C) \le \alpha.
\end{align*}
This function gives the optimal detection probability of a binary
classifier constrained to have false alarm rate no more than $\alpha$,
where $C$ here represents a subset of $\sX$ that predicts the class of
$F$.

As shown in \citet{blanchard10,scott13}, $\nu^* = \left.
\frac{dp}{d\alpha}\right|_{\alpha = 1^-}$, the slope of the optimal ROC
evaluated at the right endpoint where the false positive rate becomes 1.
The estimator $\nuhat$ studied in \citet{blanchard10}
implements this principle, but relies on distribution free confidence
intervals (to achieve universal consistency), and thus tends to be too
conservative in practice.



Therefore we introduce a more practical implementation of the above
principle for MPE, and apply it to CPE. Given random samples $\widehat{F}$
and $\widehat{H}$ from $F$ and
$H$, we treat these as training classes for a binary classification
problem, and train a kernel logistic regression (KLR) classifier using a
Gaussian kernel. We then vary the threshold on the KLR posterior class
probability to generate an empirical version of the optimal ROC, and
obtain $\nuhat$ by estimating the slope of this empirical ROC at its right
endpoint. Note that the choice to use KLR is simply for convenience, and any
binary classifier capable of producing an ROC, such as cost-sensitive SVMs, could be used instead.

Since the empirical ROC may be noisy at its right endpoint, we fit a curve
to the empirical ROC and take the right endpoint slope of the fitted curve
to be our proportion estimate. \citet{lloyd00} provides two
regression models for ROCs, and we augment them both to include an extra
linear term in an attempt to better model the linear behavior seen towards
the right end of the ROC.

In particular, for a given ROC, let $\alpha$ denote the false positive
rate, $p(\alpha)$ the corresponding detection rate, and $f(\alpha)$ the model for $p(\alpha)$.
Our regression models are:
\begin{equation}
\label{eqn:gaussianregression}
f_{\gamma, \Delta}(\alpha) = (1-\gamma)Q(Q^{-1}(\alpha) + \Delta)
+ \gamma \alpha.
\end{equation}
\begin{equation}
\label{eqn:concaveregression}
f_{\gamma, \Delta, \mu}(\alpha) =
(1-\gamma)(1 + \Delta(\alpha^{-\mu} - 1))^{-\frac{1}{\mu}} + \gamma
\alpha.
\end{equation}
where $Q$ is the standard normal CDF, $\Delta$ controls
ROC quality, $\mu$ is an asymmetry parameter, and $\gamma$ is the slope of
the added linear component. See \citet{lloyd00} for more insight into the form
of these models.

Since the domain and range of the ROC are probabilities, we fit the
models by minimizing the binomial deviance between the empirical ROC
given by $\pfhat_j$ and $\pdhat_j$, where $j = 1, \ldots, n$ indexes
sample
points along the empirical ROC, and the model $f(\pfhat)$ as given by
Eqns. \eqref{eqn:gaussianregression} or \eqref{eqn:concaveregression}:
$$
B_f(\pfhat, \pdhat) = -2 \sum_{j=1}^{n} \pdhat_j \log(f(\pfhat_j))
+ (1-\pdhat_j) \log(1-f(\pfhat_j))
$$
The right-endpoint slope of the model as a function of the fitted
parameters is $\gamma$
in the case of \eqref{eqn:gaussianregression} and $(1-\gamma) \Delta +
\gamma$ in the case of \eqref{eqn:concaveregression}.

\subsection{New MPE-based Algorithms for CPE}
\label{sec:imp}

We apply the above algorithm to CPE following the framework of Sec. 
\ref{sec:cpe}, so that $\pihat_i := \nuhat(\widehat{P}_0, \widehat{P}_i)$,
where recall $\widehat{P}_0$ and $\widehat{P}_i$ represent the data drawn from
the unlabeled test distribution and training class $i$ respectively. In the first set of experiments, there are 
$M$ observed training classes, and our method allows for the existence of 
an $(M+1)st$ class, estimating $\pihat_{M+1} = 1 - \sum_{i=1}^{M} 
\pihat_i$. In the second set of experiments, there are $M-1$ training 
classes, and the anomalous class proportion $\pi_M$ is estimated as 
$\pihat_M = 1 - \sum_{i=1}^{M-1} \pihat_i$. We found the model from Eqn. 
\eqref{eqn:concaveregression} performed best. In the results we denote 
this CPE method as \emph{MPE-Incomplete} since it assumes incomplete 
knowledge of the classes.





In the fully observed case (the first experimental setting), we showed in
Sec. \ref{sec:full} that our approach consistently estimates the true
class proportions. However, due to estimation error the estimates $\widehat{\pi}_1, \ldots,
\widehat{\pi}_M$ do not sum to one, as they should in this setting.
Therefore, for a fairer comparison with existing methods, we also
introduce two extensions of MPE-based CPE that, like previous methods, do
not support an anomalous class in the test data, but do perform better
when all classes are observed.

The first extension is to simply project the vector of estimated
proportions onto the probability simplex $\Delta^M$. In the results, we
denote this projected estimate as \emph{MPE-Projected}.

The second extension forms $M$ empirical ROCs based on the distributions
$(\Ptest,P_i)$, $i=1, \ldots, M$, and fits all ROC
curves simultaneously while constraining the estimated class proportions to
sum to one. We use the model from Eqn. \eqref{eqn:gaussianregression}
since the slope at the right endpoint is simply $\gamma$. Letting
$f$ be Eqn. \eqref{eqn:gaussianregression}, and $B_f$ the binomial
deviance given above, we solve
\begin{equation*}
\begin{aligned}
& \underset{\gamma_i, \Delta_i}{\text{minimize}}
& & \sum_{i=1}^M B_f(\pfhat^i, \pdhat^i)
\text{,  subject to} \sum_{i=1}^M \gamma_i = 1
\end{aligned}
\end{equation*}
where $(\pfhat^i,\pdhat^i)$ is the empirical ROC based on $\widehat{P}_0$ 
and $\widehat{P}_i$.
This extension is denoted \emph{MPE-Joint}.

\subsection{Evaluation}
\label{sec:exp}


Recall that we consider two experimental settings. In the first, all
training classes are observed, while in the second, the $M$th class is not
observed.

We compare against several approaches noted in the related work section. 
We denote the methods by \citet{saerens01}, \citet{titterington83}, and 
\citet{plessis12} as EM, $L^2$ Distance, and KL-Divergence\footnote{Due to 
computational constraints, we limited the input to the KL-Divergence 
method to 1000 training and 1000 testing examples, and were not able to 
use it in the multiclass setting.}, respectively. Since the EM algorithm 
requires posterior class probabilities, we use kernel logistic regression 
in both the EM algorithm and our method. Finally, we compare against a 
simple baseline estimate defined as the proportions of the labels 
predicted by a KLR classifier on the test data.

Our experiments were conducted on 13 well-known binary data sets and 5
multiclass data sets. Each data set was permuted 10 times and performance
was computed by averaging over permutations. To measure performance we use
the $\ell_1$-norm between the estimated class proportion vector and the
vector of true class proportions. For each data set and permutation, we
manually set the class proportion of the $M$th class to range over the
following set of values: \{1\%, 10\%, 20\%, $\ldots$, 90\%, 99\%\}. In the
binary case, the positive class proportion was taken to be the $M$th class
($M=2$). In the multiclass case, the largest class in the original data
set
was taken to be the $M$th class. The size of both the training set and
testing set were kept constant over all proportions. As a result, as the
$M$-th class grows the remaining classes shrink proportionately.

In the first experimental setting, the $M$th class is observed. Under the assumption that
all classes are observed, and to fairly compare to the other methods, in this scenario we discard the
estimate of the $(M+1)st$ class proportion for the \emph{MPE-Incomplete} method. Table
\ref{table:performance} reports the $\ell_1$-norm performance measure
means and standard deviations, where the average is taken over permutation
and varied class proportion. Fig. \ref{fig:BinaryProportionsError} shows
the performance of each method, averaged over the binary data sets, as a
function of the artificially modified class proportion.

\begin{table*}[ht]
\centering
\caption{Comparison of mean performances with standard deviations, taken
over all data permutations and resampled proportions.}
\small
\tabcolsep 3.0pt
\resizebox{\columnwidth}{!}{%
\begin{tabular}{c| c c c c c c c}
\hline\hline
Data set (M) & MPE-Incomplete & MPE-Projected & MPE-Joint & EM-KLR & $L^2$ Dist. & KL-Diverg. & baseline \\
\hline
All Binary & .188 $\pm$ .20 & .131 $\pm$ .17 & .140 $\pm$ .20 & .145 $\pm$ .21 & \bf .104 $\pm$ .12 & .155 $\pm$ .17 & .270 $\pm$ .39\\
All Multiclass & .143 $\pm$ .08 & .137 $\pm$ .09 & .114 $\pm$ .07 & .098 $\pm$ .14 & .109 $\pm$ .08 & n/a & \bf .097 $\pm$ .10\\
\hline
Australian (2) & .169 $\pm$ .12 & .132 $\pm$ .13 & .094 $\pm$ .07 & .096 $\pm$ .08 & \bf .077 $\pm$ .06 & .164 $\pm$ .14 & .179 $\pm$ .12\\
Banana (2) & .045 $\pm$ .04 & .030 $\pm$ .04 & .019 $\pm$ .02 & \bf .016 $\pm$ .02 & .128 $\pm$ .08 & .296 $\pm$ .22 & .117 $\pm$ .07\\
Breast-cancer (2) & .535 $\pm$ .20 & .312 $\pm$ .24 & .488 $\pm$ .32 & .442 $\pm$ .35 & \bf .234 $\pm$ .17 & .235 $\pm$ .19 & .875 $\pm$ .58\\
Diabetes (2) & .221 $\pm$ .10 & .152 $\pm$ .11 & .201 $\pm$ .17 & .133 $\pm$ .12 & \bf .112 $\pm$ .09 & .182 $\pm$ .18 & .393 $\pm$ .29\\
German (2) & .307 $\pm$ .15 & .188 $\pm$ .17 & .219 $\pm$ .18 & .211 $\pm$ .17 & \bf .146 $\pm$ .10 & .180 $\pm$ .13 & .645 $\pm$ .47\\
Image (2) & .086 $\pm$ .06 & .066 $\pm$ .06 & .044 $\pm$ .04 & \bf .020 $\pm$ .02 & .083 $\pm$ .07 & .134 $\pm$ .11 & .053 $\pm$ .04\\
Ionosphere (2) & .217 $\pm$ .17 & .176 $\pm$ .17 & .129 $\pm$ .11 & \bf .052 $\pm$ .04 & .125 $\pm$ .10 & .140 $\pm$ .12 & .098 $\pm$ .08\\
Ringnorm (2) & .023 $\pm$ .03 & .018 $\pm$ .03 & \bf .010 $\pm$ .01 & .165 $\pm$ .20 & .014 $\pm$ .01 & .022 $\pm$ .01 & .018 $\pm$ .01\\
Saheart (2) & .406 $\pm$ .20 & .283 $\pm$ .22 & .364 $\pm$ .27 & .222 $\pm$ .19 & \bf .184 $\pm$ .15 & .225 $\pm$ .18 & .552 $\pm$ .39\\
Splice (2) & .088 $\pm$ .07 & .073 $\pm$ .07 & \bf .049 $\pm$ .05 & .050 $\pm$ .03 & .050 $\pm$ .04 & .080 $\pm$ .06 & .105 $\pm$ .06\\
Thyroid (2) & .265 $\pm$ .19 & .204 $\pm$ .20 & \bf .153 $\pm$ .13 & .183 $\pm$ .28 & .163 $\pm$ .17 & .300 $\pm$ .25 & .339 $\pm$ .54\\
Twonorm (2) & .022 $\pm$ .02 & .018 $\pm$ .01 & \bf .010 $\pm$ .01 & .269 $\pm$ .21 & .010 $\pm$ .01 & .023 $\pm$ .01 & .025 $\pm$ .01\\
Waveform (2) & .063 $\pm$ .04 & .045 $\pm$ .03 & .043 $\pm$ .03 & .028 $\pm$ .02 & \bf .019 $\pm$ .02 & .036 $\pm$ .03 & .113 $\pm$ .07\\
\hline
SensIT (3) & .189 $\pm$ .08 & .140 $\pm$ .09 & .169 $\pm$ .08 & .340 $\pm$ .16 & \bf .104 $\pm$ .06 & n/a & .210 $\pm$ .12\\
DNA (3) & .080 $\pm$ .04 & .074 $\pm$ .04 & .048 $\pm$ .03 & \bf .025 $\pm$ .02 & .062 $\pm$ .03 & n/a & .055 $\pm$ .02\\
Opportunity (4) & .154 $\pm$ .07 & .158 $\pm$ .08 & .116 $\pm$ .05 & \bf .067 $\pm$ .04 & .156 $\pm$ .14 & n/a & .136 $\pm$ .09\\
SatImage (6) & .109 $\pm$ .06 & .115 $\pm$ .08 & .085 $\pm$ .04 & \bf .031 $\pm$ .01 & .083 $\pm$ .04 & n/a & .059 $\pm$ .02\\
Segment (7) & .183 $\pm$ .08 & .196 $\pm$ .11 & .152 $\pm$ .07 & .027 $\pm$ .01 & .139 $\pm$ .05 & n/a & \bf .025 $\pm$ .02\\
\hline
\end{tabular}
}
\label{table:performance}
\end{table*}

The results show that the \emph{MPE-Projected} and \emph{MPE-Joint} extensions are comparable
to the best performing algorithms in the binary case, and achieve the best
performance on a few data sets.
In some multiclass data sets the baseline error is low indicating the
classes are
highly separable. The EM algorithm often performed well but had high
variance. The $L^2$ Distance method performed consistently well and best
overall. The \emph{MPE-Incomplete} method does not assume the test distribution
contains only training classes, yet, it still performs reasonably well.
Using a Wilcoxon signed rank test, we found the mean performances (across data set
and varied proportion) of the algorithms were significantly different at
the 5\% level,
except the \emph{MPE-Projected}, \emph{MPE-Joint}, and EM methods in
the binary case were mutually insignificant from each other.

In the second experimental setting, the $M$th class is not available to
the various algorithms. Since competing methods do not natively support
this scenario, we allow them to estimate the class proportions of classes
they have observed and set their estimate of the anomalous class
proportion to zero. Predictably, as shown in Fig.
\ref{fig:PartialObserve}, the performances of competing methods (averaged
over data sets) rise linearly as the anomalous class proportion grows. The
\emph{MPE-Incomplete} method, in contrast, adapts to the anomalous class.

\begin{figure}
\centerline{\includegraphics[width=5.0in]{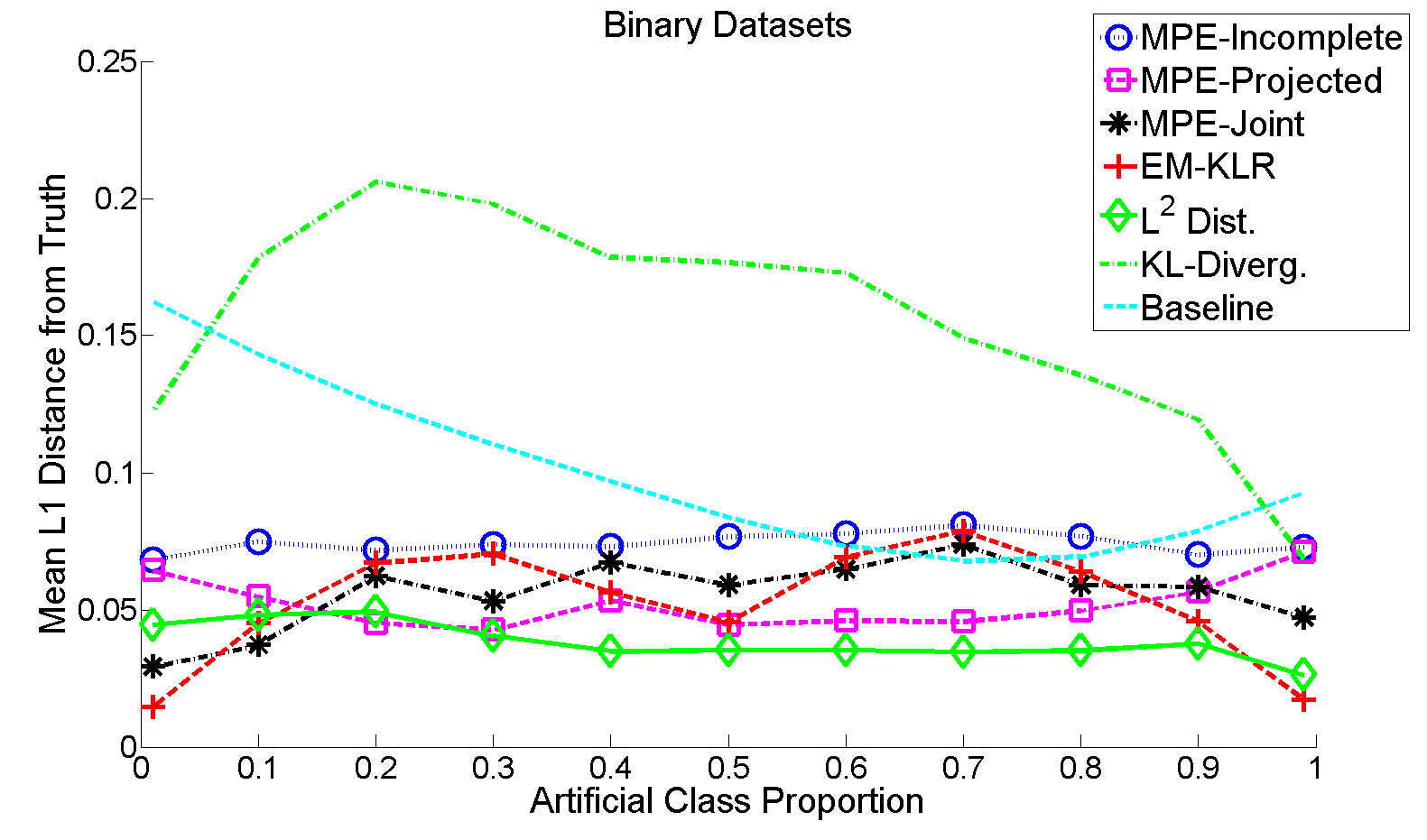}}
\caption{Mean performance over all permutations and binary data sets
as manipulated class proportion changes.}
\label{fig:BinaryProportionsError}
\end{figure}

\begin{figure}
\centerline{\includegraphics[width=5.0in]{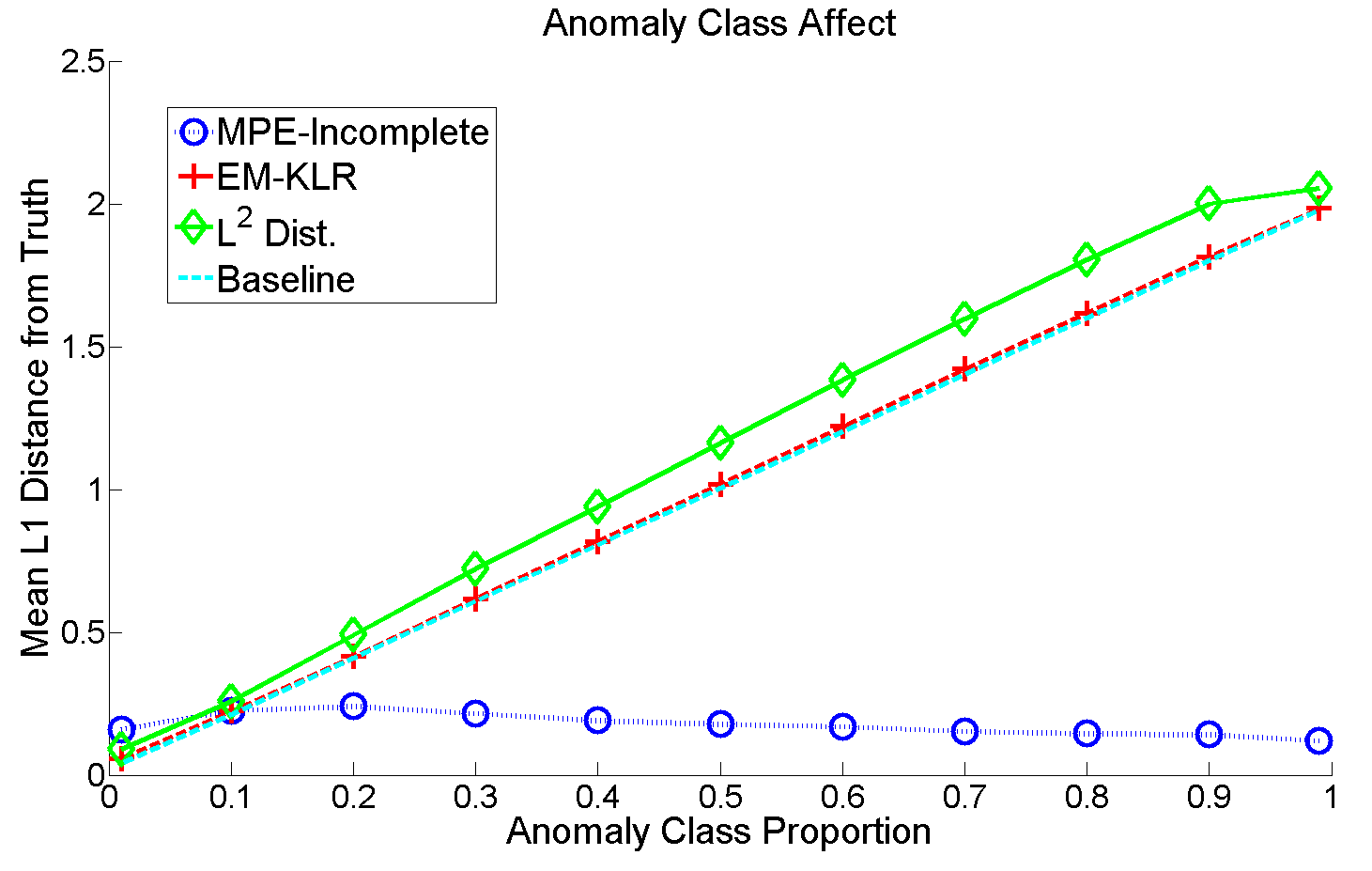}}
\caption{Mean performance over all permutations and multiclass data sets
as anomaly class proportion changes.}
\label{fig:PartialObserve}
\end{figure}

In the supplemental material, additional details of the experiments are
reported. We also describe a method that successfully estimates confidence
intervals on the $\pi_i$, with experimental results.

\section{Conclusion}

This work has demonstrated, both theoretically and experimentally, that
{\em mixture proportion estimation} can be successfully applied to the
problem of class proportion estimation. Unlike existing methods for CPE,
our approach is able to accurately estimate the proportion of an anomalous
class in the unlabeled test data. This feature of our method facilitates
error estimation with respect to the test distribution, which forms the
basis of a consistent discrimination rule for multiclass anomaly
rejection. These approaches based on MPE are, to our knowledge, the first
viable solutions to these two fundamental domain adaptation problems.

\section*{Acknowledgements}
C. Scott was supported in part by NSF Grants 0953135, 1047871, and 1217880.

\section*{Appendix}

\appendix

\renewcommand{\theequation}{S.\arabic{equation}}
\setcounter{equation}{0}

\section{Proof of Proposition 4}
Observe
\begin{align}
\lefteqn{|R(f) - \Rhat(f)| = | \RMul(f) - \RMhatul(f) |} \nonumber \\
&\qquad + \Bigg|\sum_{i=1}^{M-1} \pi_i (R_i(f) - \Rhat_i(f)) + 
\sum_{i=1}^{M-1} (\pi_i - \pihat_i) \Rhat_i(f) \Bigg| \nonumber \\
&\le |\RMul(f) - \RMhatul(f)| \nonumber \\
&\qquad +  \sum_{i=1}^M |R_i(f) - \Rhat_i(f)| + \sum_{i=1}^{M-1} |\pi_i - 
\pihat_i|. 
\label{eqn:riskbound}
\end{align}

From \eqref{eqn:riskbound} and by consistency of the 
$\pihat_i$, it suffices to show that 
\begin{equation}
\label{eqn:unifM}
\sup_{f \in \sF_{k(\nb)}} |\RMul(f) - \RMhatul(f)| \to 0
\end{equation}
and that for each $i$, $1 \le i < M$,
\begin{equation}
\label{eqn:unifi}
\sup_{f \in \sF_{k(\nb)}} |R_i(f) - \Rhat_i(f)| \to 0
\end{equation}
in probability as $\nb \to \infty$.
For $i < M$, \eqref{eqn:unifi} follows from the standard (two-class) 
VC theorem \citep{devroye96}, by \eqref{eqn:vccond}, and because the 
standard VC 
dimension of $\{x : f(x) \ne i\}_{f \in \sF}$ is upper bounded by the
multiclass VC dimension.

To establish \eqref{eqn:unifM}, recall Eqns. \eqref{eqn:RMul} and 
\eqref{eqn:RMhatul}.
For brevity we omit the dependence of $R_{i \ell}$ and $\Rhat_{i \ell}$ on 
$f$ at times. For any $f$
\begin{align*}
\lefteqn{|\RMul(f) - \RMhatul(f)|} \\
& \le \left[ |R_{0M} - \Rhat_{0M}| +
\sum_{i=1}^{M-1} |\pi_i R_{iM} - \pihat_i \Rhat_{iM}| \right] \\
&= \Bigg[ |R_{0M} - \Rhat_{0M}| \\
&\qquad +
\sum_{i=1}^{M-1}  |\pi_i (R_{jM} - \Rhat_{iM}) + (\pi_i - \pihat_i) 
\Rhat_{iM}| \Bigg] \\
&\le \Bigg[ |R_{0M}(f) - \Rhat_{0M}(f)| \\
&\qquad +
\sum_{i=1}^{M-1} \left(  |R_{iM}(f) - \Rhat_{iM}(f)| + |\pi_i - 
\pihat_i| \right) \Bigg].
\end{align*}
Standard VC theory \citep{devroye96} implies that for any $\epsilon > 0$ 
and for $0 \le i \le M-1$, $\sup_{f 
\in \sF_k} |R_{iM}(f) - \Rhat_{iM}(f)| \to 0$ with 
probability one, by \eqref{eqn:vccond}, and because the standard VC 
dimension of $\{x : f(x) \ne M\}_{f \in \sF}$ is upper bounded by the
multiclass VC dimension. The other terms 
tend to zero in probability by consistency of the $\pihat_i$. The 
result 
now follows. 

\section{Proof of Theorem 1}

Consider the decomposition into estimation and approximation errors,
$$
R(\fhat) - R^* = R(\fhat) - \inf_{f \in \sF_{k(\nb)}} R(f) + 
\inf_{f \in \sF_{k(\nb)}} R(f) - R^*.
$$
The approximation error converges to zero by the stated approximation 
property and because $k(\nb) \to \infty$.

To establish convergence in probability of the estimation error, let $\eps 
> 0$. For each positive integer $k$, let $f_k^* \in \sF_k$ such that 
$R(f_k^*) \le \inf_{f \in \sF_{k}} R(f) + \frac{\eps}4$. Then
\begin{align*}
\lefteqn{R(\fhat) - \inf_{f \in \sF_{k(\nb)}} R(f) \le R(\fhat) - 
R(f_{k(\nb)}^*) + \frac{\eps}4} \\
&\le \Rhat(\fhat) - \Rhat(f_{k(\nb)}^*) + \frac{\eps}2 \\
& \ \ \ \ \mbox{(with prob. tending to $1$, by  previous result)} 
\\
&\le \tau_{k(\nb)}  + \frac{\eps}2 \\ 
&\le \eps,
\end{align*}
where the last step holds for $\nb$ sufficiently large. The result now 
follows.

\section{Additional Details of Experiments}

For each permutation of each dataset, hyper-parameters for Kernel Logistic Regression
were selected via grid-search maximizing classification accuracy using 3-fold cross validation.
For the subsequent binary classification step between each training class and the test sample, the
bandwidth parameter from the previous step is used (to save computation) but the regularization
parameter is again selected, this time to maximize area under the ROC curve.

Before fitting our ROC regression models, we employed a Bayesian bootstrap 
method to reduce noise and provide better fits \citep{gu08}. The Bayesian 
bootstrap method also provided confidence intervals on the ROC. By fitting 
the model from Eqn. \eqref{eqn:concaveregression} to the lower 
confidence interval of the ROC, we were able to estimate an upper 
confidence interval on $\pihat$. We estimate a corresponding lower 
confidence interval as one minus the sum of the remaining class upper 
confidence intervals. Table \ref{table:confidenceintervals} shows the 
percentage of true class proportions which fall between the upper and 
lower estimated 95th-percentile confidence intervals. As expected for the 
two sided interval, we see it is valid in greater than 90\% of cases. We 
also find that the bounds are tighter when more examples are available.

\begin{table*}
\caption{Percentage of true class proportions that fall in the estimated $\pihat$ 95th percentile confidence intervals, and the standard deviation of the upper confidence interval from the true class proportion.}
\centering
\begin{tabular}{c| c| c c | c}
\hline\hline
Dataset (\# Classes) & \% in range & Train Counts & Test Counts & Upper-Interval Std. Dev.\\
\hline
All Binary &0.947 & & & 0.26 \\
All Multiclass &0.972 & & & 0.10 \\
\hline
Australian (2) &0.955 & 350 & 153 & 0.17 \\
Banana (2) &0.991 & 2677 & 1188 & 0.06 \\
Breast-cancer (2) &0.900 & 140 & 41 & 0.54 \\
Diabetis (2) &0.991 & 389 & 134 & 0.29 \\
German (2) &0.982 & 506 & 150 & 0.34 \\
Image (2) &0.945 & 1167 & 495 & 0.10 \\
Ionosphere (2) &0.918 & 178 & 63 & 0.23 \\
Ringnorm (2) &0.982 & 3738 & 1832 & 0.03 \\
Saheart (2) &0.891 & 234 & 80 & 0.41 \\
Splice (2) &0.964 & 1605 & 763 & 0.11 \\
Thyroid (2) &0.818 & 109 & 33 & 0.28 \\
Twonorm (2) &0.991 & 3738 & 1849 & 0.03 \\
Waveform (2) &0.982 & 2526 & 824 & 0.08 \\
\hline
SensIT (3) &0.991 & 1011 & 492 & 0.17 \\
DNA (3) &0.985 & 1011 & 474 & 0.09 \\
Opportunity (4) &0.975 & 1150 & 300 & 0.12 \\
SatImage (6) &0.982 & 2241 & 536 & 0.06 \\
Segment (7) &0.949 & 1167 & 165 & 0.09 \\
\hline
\end{tabular}
\label{table:confidenceintervals}
\end{table*}

Note we truncated the sizes of some multiclass datasets in order to process them in a timely manner. Namely, the Opportunity dataset \citep{opportunitydataset}, and the SensIT dataset \citep{sensitdataset}.

\nocite{opportunitydataset}
\nocite{sensitdataset}

\bibliographystyle{unsrtnat}
\bibliography{myBib}

\begin{thebibliography}{15}
\providecommand{\natexlab}[1]{#1}
\providecommand{\url}[1]{\texttt{#1}}
\expandafter\ifx\csname urlstyle\endcsname\relax
  \providecommand{\doi}[1]{doi: #1}\else
  \providecommand{\doi}{doi: \begingroup \urlstyle{rm}\Url}\fi

\bibitem[Hall(1981)]{hall81}
P.~Hall.
\newblock On the non-parametric estimation of mixture proportions.
\newblock \emph{Journal of the Royal Statistical Society}, 43\penalty0
  (2):\penalty0 147--156, 1981.

\bibitem[Titterington(1983)]{titterington83}
D.~M. Titterington.
\newblock Minimum distance non-parametric estimation of mixture proportions.
\newblock \emph{Journal of the Royal Statistical Society}, 45\penalty0
  (1):\penalty0 37--46, 1983.

\bibitem[Hall and Zhou(2003)]{hall03}
P.~Hall and X.-H. Zhou.
\newblock Nonparametric estimation of component distributions in a multivariate
  mixture.
\newblock \emph{Annals of Statistics}, 31\penalty0 (1):\penalty0 201--224,
  2003.

\bibitem[Latinne et~al.(2001)Latinne, Saerens, and Decaestecker]{saerens01}
P.~Latinne, M.~Saerens, and C.~Decaestecker.
\newblock Adjusting the outputs of a classiﬁer to new a priori probabilities
  may signiﬁcantly improve classiﬁcation accuracy: Evidence from a
  multi-class problem in remote sensing.
\newblock In C.~Sammut and A.~H. Hoffmann, editors, \emph{Proc. 18th Int. Conf.
  on Machine Learning}, pages 298--305, 2001.

\bibitem[Du~Plessis and Sugiyama(2012)]{plessis12}
M.~C. Du~Plessis and M.~Sugiyama.
\newblock Semi-supervised learning of class balance under class-prior change by
  distribution matching.
\newblock In J.~Langford and J.~Pineau, editors, \emph{Proc. 29th Int. Conf. on
  Machine Learning}, pages 823--830, 2012.

\bibitem[Chow(1970)]{chow70}
C.~K. Chow.
\newblock On optimum error and reject trade-off.
\newblock \emph{IEEE Transactions on Information Theory}, 16:\penalty0 41--46,
  1970.

\bibitem[Palatucci et~al.(2009)Palatucci, Pomerleau, Hinton, and
  Mitchell]{palatucci09}
M.~Palatucci, D.~Pomerleau, G.~E Hinton, and T.~Mitchell.
\newblock Zero-shot learning with semantic output codes.
\newblock In \emph{Advances in neural information processing systems}, pages
  1410--1418, 2009.

\bibitem[G{\"o}rnitz et~al.(2013)G{\"o}rnitz, Kloft, Rieck, and
  Brefeld]{gornitz13}
N.~G{\"o}rnitz, M.~Kloft, K.~Rieck, and U.~Brefeld.
\newblock Toward supervised anomaly detection.
\newblock \emph{J. Artif. Intell. Res.(JAIR)}, 46:\penalty0 235--262, 2013.

\bibitem[Blanchard et~al.(2010)Blanchard, Lee, and Scott]{blanchard10}
G.~Blanchard, G.~Lee, and C.~Scott.
\newblock Semi-supervised novelty detection.
\newblock \emph{Journal of Machine Learning Research}, 11:\penalty0 2973--3009,
  2010.

\bibitem[Scott et~al.(2013)Scott, Blanchard, and Handy]{scott13}
C.~Scott, G.~Blanchard, and G.~Handy.
\newblock Classification with asymmetric label noise: Consistency and maximal
  denoising.
\newblock In \emph{Proc. 2013 Conference on Learning Theory, JMLR W\&CP 30},
  pages 489--511, 2013.

\bibitem[Devroye et~al.(1996)Devroye, Gy\"{o}rfi, and Lugosi]{devroye96}
L.~Devroye, L.~Gy\"{o}rfi, and G.~Lugosi.
\newblock \emph{A Probabilistic Theory of Pattern Recognition}.
\newblock Springer, 1996.

\bibitem[Lloyd(2000)]{lloyd00}
C.~Lloyd.
\newblock Regression models for convex {ROC} curves.
\newblock \emph{Biometrics}, 56\penalty0 (3):\penalty0 862--867, September
  2000.

\bibitem[Gu et~al.(2008)Gu, Ghosal, and Roy]{gu08}
Jiezhun Gu, Subhashis Ghosal, and Anindya Roy.
\newblock Bayesian bootstrap estimation of roc curve.
\newblock \emph{Statistics in medicine}, 27\penalty0 (26):\penalty0 5407--5420,
  2008.

\bibitem[Roggen et~al.(2010)Roggen, Calatroni, Rossi, Holleczek, TrÃ¶ster,
  Lukowicz, Pirkl, Bannach, Ferscha, Doppler, Holzmann, Kurz, Holl,
  Chavarriaga, Sagha, Bayati, and MillÃ~n]{opportunitydataset}
D.~Roggen, A.~Calatroni, M.~Rossi, T.~Holleczek, G.~TrÃ¶ster, P.~Lukowicz,
  G.~Pirkl, D.~Bannach, A.~Ferscha, J.~Doppler, C.~Holzmann, M.~Kurz, G.~Holl,
  R.~Chavarriaga, H.~Sagha, H.~Bayati, and J.~MillÃ~n.
\newblock Collecting complex activity data sets in highly rich networked sensor
  environments.
\newblock In \emph{Proc. 7th Int. Conf. on Networked Sensing Systems}, 2010.

\bibitem[Duarte and Hu(2004)]{sensitdataset}
M.~Duarte and Y.~H. Hu.
\newblock Vehicle classification in distributed sensor networks.
\newblock \emph{Journal of Parallel and Distributed Computing}, 64\penalty0
  (7):\penalty0 826--838, 2004.

\end{thebibliography}

\end{document}